\newtheorem{prop}{Proposition}
\def\abovestrut#1{\rule[0in]{0in}{#1}\ignorespaces}
\def\belowstrut#1{\rule[-#1]{0in}{#1}\ignorespaces}
\def\abovespace{\abovestrut{0.20in}}
\def\belowspace{\belowstrut{0.10in}}
\long\def\@makecaption#1#2{
  \vskip 0.8ex
  \setbox\@tempboxa\hbox{\small {\bf #1:} #2}
  \parindent 1.5em  
  \dimen0=\hsize
  \advance\dimen0 by -3em
  \ifdim \wd\@tempboxa >\dimen0
  \hbox to \hsize{
    \parindent 0em
    \hfil 
    \parbox{\dimen0}{\def\baselinestretch{0.96}\small
      {\bf #1.} #2
    } 
    \hfil}
  \else \hbox to \hsize{\hfil \box\@tempboxa \hfil}
  \fi
}
\long\def\comment#1{}
\begin{document}

\begin{center}
  {\Large{\bf{Feature Selection Facilitates Learning Mixtures of Discrete Product Distributions}}}

  \vspace*{.3in}
  
  \begin{tabular}{ccc}
    Vincent\ Zhao & Steven W.\ Zucker 
    \\ \href{mailto:yuzhe.zhao@yale.edu}{yuzhe.zhao@yale.edu} &
    \href{mailto:zucker@cs.yale.edu}{zucker@cs.yale.edu}
  \end{tabular}
  
  \vspace*{.5cm}
  
  \begin{tabular}{ccc}
    Yale University &&
    \\
    New Haven, CT 06511 
  \end{tabular}
  
  \vspace*{.2in}
\end{center}

\begin{abstract}  
  Feature selection can facilitate the learning of mixtures of discrete random variables as they 
arise, e.g. in crowdsourcing tasks. Intuitively, not all workers are equally reliable but, if the less reliable ones could
be eliminated, then learning should be more robust. By analogy with Gaussian mixture models, we seek a
low-order statistical approach, and here introduce an algorithm based on the (pairwise) mutual information.
This induces an order over workers that is well structured for the `one coin' model. More generally, it is justified by a goodness-of-fit measure and is validated empirically. Improvement in real data sets can be substantial.
\end{abstract}

\section{Introduction}

Mixtures of discrete product distributions (MDPD) have been applied widely to large problems, from computational neuroscience to bioinformatics and recommendation systems. Here we concentrate on another, popular one -- crowdsourcing \cite{dawid1979maximum} -- to introduce a feature selection algorithm that works in the discrete variable setting. In effect, our algorithm enhances the learning process by identifying those workers who are likely to be performing well. We show experimentally that this feature selection leads to better performance than other state-of-the art algorithms, and we provide a theoretical framework that suggests why this is to be expected. 

Learning MDPD is NP-hard. Although many authors have proposed algorithms and heuristics to learn MDPD under different circumstances \cite{dawid1979maximum,zhang2014spectral,jain2014learning,feldman2008learning}, there is almost no literature concerning the feature selection problem as we formulate it.  An exception is \cite{chaudhuri2008learning}, who sought features that sharply separate mixture components. Their algorithm is based on correlations of the input data, but is restricted to mixtures of binary product distributions; our algorithm is applicable to general MDPD and is based on pairwise mutual information. Another group sought to identify reliable workers directly \cite{ghosh2011moderates,whitehill2009whose},  but this led to algorithms that are specific to crowdsourcing and hard to generalize for MDPD.

Dimensionality reduction for Gaussian mixture models is better studied. \cite{pan2007penalized} proposed an algorithm based on a penalized likelihood function that leads to an EM variant with a regularized M-step. \cite{azizyan2013minimax} analyze learning for a mixture of two isotropic Gaussians in high dimensions under sparse mean separation. More recently, \cite{jin2016influential} proposed an algorithm to discover influential features for high dimensional clustering. The dimensionality reduction methods in \cite{,azizyan2013minimax, jin2016influential} are based on Principal Component Analysis, which constructs features that are linear combinations of the input variables. This underlines the fundamental difference between the continuous- and the discrete-valued problems: linear combination is not a valid operator for discrete random variables. To see this, let $X$ denote a random variable which takes value from $\{`\alpha', `\beta'\}$, while $Y \in \{`a', `b', `c'\}$ is another discrete variable. $X+Y$ is obviously not well-defined.

Even though a direct generalization of the techniques for Gaussian mixture models to MDPD is not proper, the continuous variable case has been a source of inspiration in the following sense. PCA performs an eigen-decomposition on the sample covariance matrix which relies, in turn, on second-order statistics of the data. The second-order statistics for discrete random variables are basically co-occurrence. Thus we ask: can dimensionality reduction be based on co-occurrence for MDPD, which forms an analogue to the use of PCA for Gaussian mixture models? We give a positive answer in this paper and propose a novel feature selection technique for MDPD that is based on pairwise mutual information. 
The utilization of pairwise mutual information is justified by its connection to a goodness-of-fit measure and is validated by empirical studies on real crowdsourcing datasets. We show that, in effect,  the algorithm filters out noise and makes the learning more robust; in many cases we significantly reduce the error rates.


\section{Background}
\label{sec:backgroud}
We study mixtures of discrete product distributions (MDPD). Throughout the paper, we use the uppercase letters $X$ and $Y$ for random variables and the lowercase letters $x$ and $y$ for their instances (realizations). Let $X_i$ be an observable discrete variable and $Y$ the latent variable. $X_i$ takes discrete values $X_i \in \{1, 2, \ldots, C\}$ and $Y$ indicates the mixture component $Y \in \{1,2, \dots, K\}$, where $K$ is the total number of components. Let $i = 1, 2, \ldots, p$, $p$ is the dimension of the model and $X = [X_1, X_2, \ldots, X_p]^T$. MDPD is a generative model with joint probability distribution:
\begin{equation*}
  P(X, Y) = \sum_{k=1}^K P(Y=k) P(X|Y=k).
\end{equation*}
$P(X|Y=k)$ is a product distribution, i.e. 
\begin{equation*}
  P(X|Y=k) = \prod_{i=1}^{p} P(X_i|Y=k).
\end{equation*}
Given the observations $\{x_i^{(n)}\}$, the goal is to estimate the model parameters, i.e. $p(Y=k)$ and $p(X_i|Y=k)$, for all $i$ and $k$. There are many papers addressing this learning problem \cite{,dawid1979maximum,zhang2014spectral,jain2014learning,chaudhuri2008learning}. In general, those algorithms can be classified into two groups, (1) maximum likelihood estimation and (2) method of moments. The EM algorithm and its variants have been widely used to maximize the log-likelihood. However, since the log-likelihood function is non-convex, these algorithms can be stuck in a bad local maximum. Recently, several authors \cite{zhang2014spectral,jain2014learning} proposed algorithms based on method of moments for learning MDPD which relies on third-order moments. The performance of these algorithms is statistically provable under certain conditions. 


\section{Feature Selection for MDPD}
\label{sec:method}
The problem of feature selection is to reduce the model dimension by identifying a useful and relevant feature subset. It is used to simplify the model for easier interpretation, to reduce training time, to overcome the curse of dimensionality and to avoid over-fitting thereby making the model more robust. Most literature on feature selection focuses on supervised learning, where the usefulness and relevance of features are generally defined by their prediction power. Feature selection and dimensionality reduction for unsupervised learning are more challenging problems, due to the lack of labeled data. Refer to \cite{guyon2003introduction,dy2004feature} for reviews on this topic. 

It is well-known that the EM algorithm is sensitive to initialization, while the method of moments \cite{zhang2014spectral} is sensitive to some global properties of the model. The performance of both algorithms can be dramatically impaired by noisy, irrelevant and redundant data. This makes feature selection relevant to learning MDPD; and critical in practice. In this section, we introduce our feature selection technique based on pairwise mutual information and illustrate the underlying ideas. 

Intuitively, we want to identify those features that are discriminative of the latent variable $Y$, despite the lack of any direct access to that latent variable. Nothing can be said about $Y$ if only one observable variable is revealed, because the one-dimensional MDPD is not identifiable. Therefore, the learning algorithm has to rely on the interaction among different observable variables. For MDPD, if $X_i$ is known to be independent of $Y$, it can be shown that $X_i$ must also be independent of $X_j$ ($j\neq i$). On the other hand, if a strong dependence between $X_i$ and $X_j$ is observed, it can be concluded that $X_i$ and $X_j$ are discriminative of $Y$ and should be identified as useful features.

Our feature selection technique is motivated by the argument above. We use mutual information to measure the dependence between two variables,
\begin{equation*}
  I(X,Y) = \sum_{x,y} P(x,y) \log\frac{P(x,y)}{P(x)P(y)}.
\end{equation*}

\begin{algorithm}[tb]
   \caption{Feature Selection for MDPD}
   \label{algo}
\begin{algorithmic}
   \STATE {\bfseries Input:} the number of features to be selected $L$, observed data $x_i^{(n)}$ for $i = 1,2,\ldots,p$ and $n = 1, 2, \dots, N$.
   \STATE {Estimate $I(X_i, X_j)$ from the data.}
   \STATE {Use either of the two heuristics:}
   \STATE{(a) Find the feature subset $S$ of size $L$ so that 
   \begin{equation*}
    S = \arg\max_S \sum_{\substack{i<j\\i,j\in S}} I(X_i, X_j).
   \end{equation*}}
   \STATE{(b)} For each $X_i$, calculate the mutual information $\text{score}_i = \sum_{j \neq i} I(X_i, X_j)$ and select top $L$ features according to their scores. 
\end{algorithmic}
\end{algorithm}

The feature selection technique is shown in algorithm \ref{algo}. First, we estimate the joint probability $P(x_i, x_j)$ with $\hat{P}(x_i, x_j) = \frac{\#(x_i, x_j)}{N}$. $\#(x_i,x_j)$ is the co-occurrence between $x_i$ and $x_j$ and $N$ is sample size. Then, we estimate pairwise mutual information $I(X_i, X_j)$ with $\hat{P}(x_i, x_j)$, for all $i$ and $j$. After getting the pairwise mutual information matrix $[\hat{I}(X_i, X_j)]_{p\times p}$, two feature selection heuristics are proposed. The first one is to maximize the sum of the entries of sub-matrices of the pairwise mutual information matrix. The other one is based on feature ranking according to the mutual information score, i.e.
\begin{equation}
  \text{score}_i = \sum_{j \neq i} I(X_i, X_j).
  \label{eq:score}
\end{equation}

In practice, the mutual information score can be used to decide the number of features to be used in the model. In section \ref{sec:experiments}, we plot the mutual information score for the features in the real datasets. It is observed that the score drops quickly after the top few features and has a relatively flat tail. The curve ``resembles'' the plot of eigenvalues of PCA. Therefore, we may set a cut-off according to the gradient of the curve.

\subsection{One-Coin Model}
To demonstrate the feature selection technique, we consider a simple mixture of discrete product distributions that is usually referred to as ``one-coin model'' in crowdsourcing. For one-coin model, the number of components $K$ is identical to $C$. We assume that $Y$ is uniformly distributed, i.e. $p(Y=k) = \frac{1}{K}$ for $k = 1,2,\ldots, K$. And the conditional probability of $X_i$ is parameterized by a single parameter $p_i$. More concretely, it is defined as 
\begin{equation}
  P(X_i = c|Y=k) =
  \begin{cases}
    p_i & \text{if } c=k \\
    \frac{1-p_i}{k-1} & \text{if } c \neq k
  \end{cases}
\end{equation}

In other words, the worker $i$ uses a single coin flip to decide the label. With probability $p_i$, the worker gives the correct label, whatever the true label is. And with probability $1-p_i$, he randomly gives an incorrect label. In this case, it is intuitive to define the capabilities of workers. A worker with larger $p_i$ is more capable than a worker with smaller $p_i$. A worker with $p_i = 1$ is the best, because he always gives the correct label. Given a group of workers with different capabilities, the goal of feature selection is to find the those most capable ones. 

The mutual information $I(X_i,X_j)$ depends on the joint probability distribution $P(X_i,X_j)$. Since $P(X_i = c) =  \frac{1}{K}$, the marginal distribution of $X_i$ is uniform for all $i$. The joint distribution $P(X_i,X_j)$ can be represented by a $C$-by-$C$ symmetric matrix whose diagonal elements and off-diagonal elements are (respectively) identical. Let the diagonal elements $p(X_i=c, X_j=c)$ be denoted $\alpha$ and those off-diagonal elements $p(X_i=b, x_j=c)$ ($b\neq c$) become $\frac{1 - K \alpha}{K(K-1)}$. The mutual information is then
\begin{equation}
  I(X_i, X_j) = K \alpha \log{\alpha} + (1-K\alpha)\log{\frac{1-K\alpha}{K(K-1)}} + \log(K^2). \\
\end{equation}
It equals zero when $\alpha = \frac{1}{K^2}$ and is monotonically increasing when $\alpha > \frac{1}{K^2}$. In addition, $\alpha$ can be expressed as a function of $p_i$ and $p_j$, i.e.
\begin{align*}
  \alpha &= \frac{1}{K}p_ip_j + \frac{K-1}{K}\frac{1-p_i}{K-1}\frac{1-p_j}{K-1} \\
  &= \frac{1}{4(K-1)}[(p_i + p_j - \frac{2}{K})^2 - (p_i - p_j)^2] + \frac{1}{K^2}.
\end{align*}

This function describes a hyperbolic paraboloid as shown in Figure \ref{fig:saddle}. When $p_i = p_j = \frac{1}{K}$, the function is at its saddle point, where $\alpha = \frac{1}{K^2}$. In the region where both $p_i$ and $p_j$ are larger than $\frac{1}{K}$, $\alpha$ is monotonically increasing with regard to $p_i$ when $p_j$ is fixed, and vice versa. 

\begin{figure}
\centering
\begin{minipage}{.49\textwidth}
	\centering
	\includegraphics[width=\columnwidth]{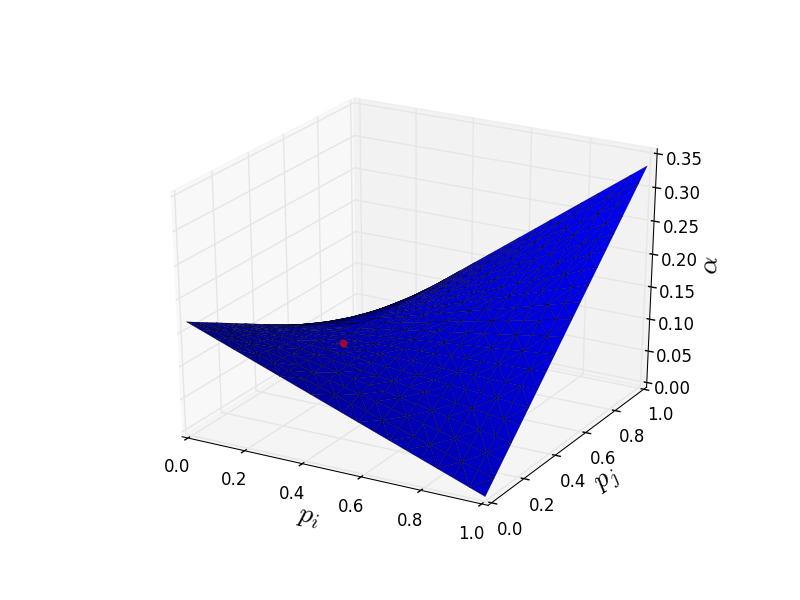}
	\caption{The figure shows $\alpha$ as a function of $p_i$ and $p_j$ when $K=3$. The red dot is the saddle point of the hyperbolic paraboloid, where $p_i = p_j = \frac{1}{3}$ and $\alpha = \frac{1}{9}$. In the area when $p_i, p_j > \frac{1}{3}$, $\alpha$ is monotonically increasing with regard to either $p_i$ or $p_j$ when the other is fixed.}
	\label{fig:saddle}
\end{minipage}
\begin{minipage}{.49\textwidth}
	\centering
	\includegraphics[width=.85\columnwidth]{illustration}
	\caption{This figure shows the relationship between the proposed goodness-of-fit measure $G(\Theta^{t+1}; \Theta^t)$ and the KL-divergence $D_{KL}(P_0(X) || P_{\Theta^t}(X))$. The two curves in the figure are the marginal log-likelihood and its lower bound derived from Jansen's inequality.}
	\label{fig:illustration}
\end{minipage}

\end{figure}

Thus, we conclude that when $p_i > \frac{1}{K}$ and $p_j > \frac{1}{K}$ (i.e. when workers are better than guess randomly.), the mutual information $I(X_i, X_j)$ is monotonically increasing with regard to $p_i$ and $p_j$. In other words, if worker $i$ is more capable than worker $j$ (i.e. $p_i > p_j > \frac{1}{K}$), we have $I(X_i, X_k) > I(X_j, X_k)$. This is enough to guarantee that our feature selection techniques (either (a) or (b) in algorithm \ref{algo}) will always select $X_i$ over $X_j$.

\section{Pairwise Mutual Information and Maximum Likelihood Estimation}
\label{sec:theory}

In this section, the use of pairwise mutual information is justified with theoretical analysis that reveals its relation to maximum likelihood estimation and a goodness-of-fit measure. We start by introducing the maximum likelihood objective function of MDPD and the well-known expectation-maximization (EM) algorithm. Provided $N$ data points, MLE seeks to maximize the marginal log-likelihood
\begin{equation*}
  l(\Theta) := \sum_{n=1}^N \log P_\Theta(x^{(n)})
\end{equation*}
where $P_\Theta(x) = \sum_y P_\Theta(y) \prod_i P_\Theta(x_i|y)$. $\Theta$ denotes all the parameters of the model, i.e. $\omega_k = P_\Theta(Y=k)$ and $\mu_{irk} = P_\Theta(X_i=r|Y=k)$. This is the standard definition of log-likelihood with finite samples. However, for convenience, we conduct our analysis at the population level (infinite sample size). Let $P_0(X)$ denote the underlying distribution from which samples are drawn. The marginal likelihood can be defined as
\begin{equation*}
  l(\Theta) := \sum_x P_0(x) \log P_\Theta(x).
 \end{equation*} 

Direct optimization on the marginal likelihood is hard. A common workaround uses Jensen's inequality to relax the problem. Let $q(Y)$ denote a probability distribution over $Y$. By applying Jensen's inequality, we have
\begin{align*}
  l(\Theta) &=  \sum_x P_0(x) \log \sum_y P_\Theta(x, y) \\
  &\geq \sum_x P_0(x) \sum_y q(y) \log \frac{P_\Theta(x, y)}{q(y)}.
\end{align*}
Instead of maximizing the marginal log-likelihood, we are going to maximize the function on the right-hand side. This leads to the EM algorithm, an iterative algorithm consisting of two steps. Let $\Theta^t$ be the model parameters at time $t$.

\textbf{E-step}: Calculate the posterior distribution $P_{\Theta^t}(Y|x)$ for all the configurations of $x$ and let $q(y)$ be $P_{\Theta^t}(y|x)$.

\textbf{M-step}: Update the parameters by calculating 
\begin{equation}
  \Theta^{t+1} = \arg\max_\Theta F(\Theta; \Theta^t)
  \label{eq:Mstep}
\end{equation}
where $F(\Theta; \Theta^t) = \sum_x P_0(x) \sum_y P_{\Theta^t}(y|x) \log \frac{P_\Theta(x, y)}{P_{\Theta^t}(y|x)}$.

On the other hand, we want to get an upper bound of the log-likelihood. The KL-divergence $D_{KL}\left(P_0(X) || P_\Theta(X)\right)$ is defined as

\begin{align*}
  D_{KL}(P_0(X) || P_\Theta(X)) &:= \sum P_0(x) \log \frac{P_0(x)}{P_\Theta(x)} \\
  &= - H_{P_0}(X) - l(\Theta).
\end{align*}

It equals the difference between the negative entropy of the data and the marginal log-likelihood. Due to the non-negativity of KL-divergence, the marginal log-likelihood $l(\Theta)$ is upper bounded by the negative entropy $- H_{P_0}(X)$. Moreover, the KL-divergence equals zero when $P_{\Theta}(x)$ and $P_0(x)$ are identical almost everywhere. Thus, the KL-divergence can be considered as a goodness-of-fit measure for mixture models. However, we usually don't have access to the probability distribution $P_0(X)$ and estimating the negative entropy from the data is computationally intractable. To overcome the difficulty, we consider using $F(\Theta; \Theta^t)$ to approximate $l(\Theta)$. 

\begin{equation}
  G(\Theta; \Theta^t) := - H_{P_0}(X) - F(\Theta; \Theta^t)
  \label{eq:G}
\end{equation}

$G(\Theta; \Theta^t)$ is defined to be the difference between $- H_{P_0}(X)$ and $F(\Theta; \Theta^t)$. $G(\Theta; \Theta^t)$ is a function of $\Theta$ with parameter $\Theta^t$. And equation \ref{eq:Mstep} leads to fact that $G(\Theta^{t+1}; \Theta^t)= \min_\Theta G(\Theta; \Theta^t)$. Later on, we will focus on $G(\Theta^{t+1}; \Theta^t)$. It can be shown that $G(\Theta^{t+1}; \Theta^t)$ underestimates $D_{KL}(P_0(X) || P_\Theta^t(X))$ but overestimates $D_{KL}(P_0(X) || P_\Theta^{t+1}(X))$. The relation between $G(\Theta^{t+1}; \Theta^t)$ and $D_{KL}(P_0(X) || P_{\Theta^t}(X))$ is illustrated in figure \ref{fig:illustration}. Moreover, we have the following lemma.

\begin{lem}
\label{lem: multi information}
Let $G(\Theta; \Theta^t)$ be defined as equation \ref{eq:G} and $\Theta^{t+1} = \arg\min_\Theta G(\Theta; \Theta^t)$. For MDPD, it can be shown that 
\begin{equation}
  G(\Theta^{t+1}; \Theta^t) = \sum_x \sum_y \tilde{P}_{\Theta^t}(x,y) \log \frac{\tilde{P}_{\Theta^t}(x|y)}{\prod_i \tilde{P}_{\Theta^t}(x_i|y)}.
\end{equation}
where $\tilde{P}_{\Theta^t}(x,y) := P_0(x)P_{\Theta^t} (y|x)$.
\end{lem}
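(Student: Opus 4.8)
The plan is to recognize that $G(\Theta^{t+1};\Theta^t)$ collapses to a KL-type divergence between the ``completed'' distribution $\tilde{P}_{\Theta^t}(x,y)$ and the model $P_{\Theta^{t+1}}(x,y)$ produced by one M-step, and then to exploit the product structure of the MDPD to rewrite that divergence in the stated form.

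First I would compute the M-step maximizer explicitly. Since the denominator $P_{\Theta^t}(y|x)$ in $F(\Theta;\Theta^t)$ does not depend on $\Theta$, maximizing $F$ is equivalent to maximizing $\sum_x\sum_y \tilde{P}_{\Theta^t}(x,y)\log P_\Theta(x,y)$, which with $\log P_\Theta(x,y)=\log\omega_y+\sum_i\log\mu_{i,x_i,y}$ decouples into independent concave subproblems for $\omega$ and for each $\mu_i$. A standard Lagrange-multiplier argument under the normalization constraints $\sum_k\omega_k=1$ and $\sum_r\mu_{irk}=1$ gives $\omega_k^{t+1}=\tilde{P}_{\Theta^t}(k)$ and $\mu_{irk}^{t+1}=\tilde{P}_{\Theta^t}(X_i=r\mid k)$, so that
\[
  P_{\Theta^{t+1}}(x,y)=\tilde{P}_{\Theta^t}(y)\prod_i \tilde{P}_{\Theta^t}(x_i\mid y).
\]

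Next I would rewrite $G$ itself. Using $\sum_y P_{\Theta^t}(y|x)=1$, the term $-H_{P_0}(X)=\sum_x P_0(x)\log P_0(x)$ can be folded into the double sum over $(x,y)$; combined with the definition of $F$ and with $\tilde{P}_{\Theta^t}(x,y)=P_0(x)P_{\Theta^t}(y|x)$, this yields
\[
  G(\Theta^{t+1};\Theta^t)=\sum_x\sum_y \tilde{P}_{\Theta^t}(x,y)\log\frac{\tilde{P}_{\Theta^t}(x,y)}{P_{\Theta^{t+1}}(x,y)},
\]
simply a KL divergence between $\tilde{P}_{\Theta^t}$ and $P_{\Theta^{t+1}}$ on the joint space. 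Finally, substituting the factored M-step model together with $\tilde{P}_{\Theta^t}(x,y)=\tilde{P}_{\Theta^t}(y)\,\tilde{P}_{\Theta^t}(x\mid y)$, the factor $\tilde{P}_{\Theta^t}(y)$ cancels from the ratio, leaving $\tilde{P}_{\Theta^t}(x\mid y)/\prod_i\tilde{P}_{\Theta^t}(x_i\mid y)$, which is exactly the claimed expression.

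I expect the only real obstacle to be the first step: carefully justifying that the M-step maximizer is the product of the marginals $\tilde{P}_{\Theta^t}(y)\prod_i\tilde{P}_{\Theta^t}(x_i\mid y)$, i.e.\ that maximizing $F$ over the constrained simplices really produces these closed-form updates (including the check that $\sum_k\tilde{P}_{\Theta^t}(k)=1$, so the weights form a valid distribution). Everything afterward is bookkeeping: recognizing the KL form and cancelling the shared factor $\tilde{P}_{\Theta^t}(y)$. It is worth noting that the resulting right-hand side is, for each fixed $y$, the multi-information (total correlation) of $X$ under $\tilde{P}_{\Theta^t}(\cdot\mid y)$, averaged over $y$, which both explains the lemma's label and foreshadows the link to pairwise mutual information.
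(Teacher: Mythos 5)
Your proposal is correct and follows essentially the same route as the paper: identify the M-step maximizer $P_{\Theta^{t+1}}(x,y)=\tilde{P}_{\Theta^t}(y)\prod_i\tilde{P}_{\Theta^t}(x_i\mid y)$, rewrite $G(\Theta^{t+1};\Theta^t)$ as the KL divergence between $\tilde{P}_{\Theta^t}(x,y)$ and $P_{\Theta^{t+1}}(x,y)$, and cancel the shared factor $\tilde{P}_{\Theta^t}(y)$. The only difference is that you spell out the Lagrange-multiplier derivation of the M-step updates, which the paper simply declares ``straightforward.''
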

\begin{proof}
 By the definition of $G(\Theta; \Theta^t)$, to minimize $G(\Theta; \Theta^t)$ is equivalent to maximize $F(\Theta;\Theta^t)$, which is basically the M-step in EM. Since $P_\Theta(x,y) = P_\Theta(y) \prod_i P_\Theta(x_i|y)$, it is straightforward from equation \ref{eq:Mstep} that 
 \begin{align*}
  P_{\Theta^{t+1}}(Y) &= \tilde{P}_{\Theta^t} (Y) \\
  P_{\Theta^{t+1}}(X|Y) &= \tilde{P}_{\Theta^t}(X|Y).
 \end{align*}
Therefore,
\begin{align}
  G(\Theta^{t+1}; \Theta^t) &= - H_{P_0}(X) - F(\Theta^{t+1}; \Theta^t) \\
  &= \sum_{x,y} P_0(x) P_{\Theta^t}(y|x) \log \frac{P_0(x)P_{\Theta^t}(y|x)}{P_{\Theta^{t+1}}(x,y)} \\
  &= \sum_{x,y} P_0(x) P_{\Theta^t}(y|x) \log \frac{\tilde{P}_{\Theta^t}(x, y)}{\tilde{P}_{\Theta^t}(y)\prod_i \tilde{P}_{\Theta^t}(x_i|y)}
\end{align}
\end{proof}

In information theory, the multi-information of a multivariate probabilistic distribution $p(X)$ is defined as
\begin{equation*}
  \sum_x P(x) \log \frac{P(x)}{\prod_i P(x_i)}.
 \end{equation*}
It is the KL-divergence between $p(X)$ and the product distribution $\prod_i p(X_i)$. Multi-information is zero when the random variables are mutually independent. According to lemma \ref{lem: multi information}, $G(\Theta^{t+1}; \Theta^t)$ measures the dependency among variables left in the data which is not explained by the current mixture model. It seems promising, however it is still computational intractable. As a work-around, we apply Bethe entropy approximation \cite{wainwright2008graphical} to approximate multi-information with the sum of pairwise mutual information. This leads to an approximated goodness-of-fit measure (equation \ref{eq:approx}) for MDPD which only relies on the second-order statistics of the data; it can be calculated efficiently.
\begin{equation}
  G(\Theta^{t+1}; \Theta^t) \approx \sum_{i<j} I_{\tilde{P}_{\Theta^t}}(X_i, X_j|Y)
  \label{eq:approx}
\end{equation}
where the conditional mutual information 
\begin{equation*}
  I_{\tilde{P}_{\Theta^t}}(X_i, X_j|Y) = \sum_{x_i, x_j, y} \tilde{P}_{\Theta^t}(x_i,x_j,y) \log\frac{\tilde{P}_{\Theta^t}(x_i,x_j|y)}{\tilde{P}_{\Theta^t}(x_j|y)\tilde{P}_{\Theta^t}(x_i|y)}.
\end{equation*}

To summarize, we have derived a goodness-of-fit measure (equation \ref{eq:approx}) for MDPD based on maximum likelihood estimation and information theory. The question is how it is related to the feature selection algorithm we have proposed earlier.

\begin{prop}
  Let $P_0(X)$ be the underlying probability distribution of the data and $P_{\Theta^0}(X, Y)$ be an one-component mixture model satisfying $P_{\Theta^0}(X_i|Y=1) = P_0(X_i)$. Therefore, the proposed goodness-of-fit measure (equation \ref{eq:approx}) becomes
  \begin{equation}
    \sum_{i< j} I_{P_0}(X_i, X_j)
    \label{eq:sum MI}
  \end{equation}
\end{prop}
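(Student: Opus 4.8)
The plan is to substitute the specific one-component model $\Theta^0$ into the approximate goodness-of-fit measure (equation \ref{eq:approx}) and show that each conditional mutual information term collapses to an ordinary pairwise mutual information under $P_0$. The entire argument rests on a single structural observation: since $\Theta^0$ has only one mixture component, the latent variable $Y$ is deterministic, so conditioning on it is vacuous.

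First I would compute the posterior. Because $Y$ can take only the value $1$, Bayes' rule gives $P_{\Theta^0}(Y=1|x) = 1$ for every configuration $x$, independently of the emission parameters. Consequently the reweighted distribution $\tilde{P}_{\Theta^0}(x,y) = P_0(x)P_{\Theta^0}(y|x)$ places all of its mass on $y=1$, where it equals $P_0(x)$, and is zero for every $y \neq 1$. In particular the latent marginal satisfies $\tilde{P}_{\Theta^0}(Y=1) = \sum_x P_0(x) = 1$.

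Next I would push these identities through the definition of the conditional mutual information. The term $y=1$ is the only nonzero contribution to the sum over $y$, and because $\tilde{P}_{\Theta^0}(Y=1)=1$, marginalizing $\tilde{P}_{\Theta^0}(\cdot, y=1) = P_0(\cdot)$ over the remaining coordinates yields $\tilde{P}_{\Theta^0}(x_i,x_j|y=1) = P_0(x_i,x_j)$, $\tilde{P}_{\Theta^0}(x_i|y=1) = P_0(x_i)$ and $\tilde{P}_{\Theta^0}(x_j|y=1) = P_0(x_j)$. Substituting these into the formula for $I_{\tilde{P}_{\Theta^0}}(X_i,X_j|Y)$ leaves exactly
\begin{equation*}
  I_{\tilde{P}_{\Theta^0}}(X_i,X_j|Y) = \sum_{x_i,x_j} P_0(x_i,x_j)\log\frac{P_0(x_i,x_j)}{P_0(x_i)P_0(x_j)} = I_{P_0}(X_i,X_j),
\end{equation*}
and summing over $i<j$ reproduces equation \ref{eq:sum MI}.

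I do not expect a genuine obstacle here; the content is the recognition that a single-component model makes the conditioning on $Y$ vacuous, after which the conditional mutual information degenerates to the unconditional one and the rest is bookkeeping. The one point worth stating carefully is that the hypothesis $P_{\Theta^0}(X_i|Y=1)=P_0(X_i)$ is what identifies $\Theta^0$ as the natural marginal-matching one-component fit, even though the collapse of the posterior, and hence the final identity, holds for $\tilde{P}_{\Theta^0}$ irrespective of the emission parameters.
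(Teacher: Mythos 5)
Your proposal is correct and follows essentially the same route as the paper's own proof: the paper likewise observes that a single component forces $P_{\Theta^0}(y|x)=1$, hence $\tilde{P}_{\Theta^0}(x,y)=P_0(x)$, and the conditional mutual information in equation \ref{eq:approx} degenerates to the unconditional $I_{P_0}(X_i,X_j)$. You simply carry out the marginalization bookkeeping more explicitly than the paper does, and your side remark that the collapse holds regardless of the emission parameters is accurate.
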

\begin{proof}
  The proof follows the fact that since there is only one mixture component, we always have $P_{\Theta^0}(y|x) = 1$ and it leads to $\tilde{P}_{\Theta^0}(x, y) = P_0(x)$.
\end{proof}

This proposition indicates that the sum of pairwise mutual information (equation \ref{eq:sum MI}), which can be estimated from data, is actually a goodness-of-fit measure of the one-component mixture model. If the features are mutually independent, an one-component mixture model will be enough to model the data perfectly and the sum of pairwise mutual information will be close to zero. Our feature selection algorithm (algorithm \ref{algo}) selects the feature subset that maximizes the sum of mutual information with regard to the feature set. In other words, the selected features are the dimensions where the one-component mixture model doesn't explain the data well.

\section{Empirical Studies}
\label{sec:experiments}
In this section, we demonstrate our feature selection algorithm for crowdsourcing. Crowdsourcing has been an popular way to collect labels for large datasets in many application domains, including computer vision and natural language processing. Web services such as Amazon Mechanical Turk provide platforms where human intelligence tasks are posted and large quantities of labels from hundreds of online workers are collected. The problem is to infer the true labels for datasets from the collected labels.

The performances of different algorithms under our feature selection method are compared. And five real datasets are used in this study. We show that the algorithms are able to achieve a low mis-clustering rate with fairly small feature (worker) subsets, which reveals the redundancy inherent in the real datasets. In some cases, feature selection can even significantly boost the performance. 

We also compare our feature selection algorithm to a supervised feature selection method. The supervised feature selection is done by ranking features according to their individual mis-clustering rate and selecting top features accordingly. As the real problem is essentially unsupervised, using a supervised feature selection is `cheating', as it leaks true labels to the algorithm. Nevertheless, it provides a benchmark of how useful feature selection could possibly be.

\subsection{Spectral Method and Majority Voting}
According to \cite{zhang2014spectral} and related papers, spectral method (\emph{opt-D\&S}) and majority voting (with EM) outperforms other algorithms on these datasets. Therefore, we implement these two algorithms in our study.

The spectral method is a two-stage algorithm proposed in \cite{zhang2014spectral}. The first stage uses the method of moments and tensor decomposition to estimate the mixture model parameters, while the second stage runs regular EM iterations taking the results of the first stage as initialization. The first stage of the algorithm randomly partitions all the workers into three disjoint groups. Therefore, the performance of the algorithm may fluctuate. To properly evaluate the performance, we repeat the spectral method multiple times and report the median, the first, and the third quartile. 

Majority voting is a simple and popular algorithm for crowdsourcing. It gives the prediction by summing up all worker labels and picks the one with the highest votes. When there are ties in the votes, it randomly picks one and we report the expected mis-clustering error. For example, if the votes for three labels are tied, the expected error will be $\frac{2}{3}$. When we evaluate mis-clustering rate, due to missing values, it is possible that some items receive no votes from the selected workers. In those cases, we treat them as ties.

\subsection{Real Datasets and Deal with Missing Values}

\begin{table}[t]
\caption{The summary of datasets used in the empirical study.}
\label{tb:data summary}
\vskip 0.15in
\begin{center}
\begin{small}
\begin{sc}
\begin{tabular}{p{.8cm}cccp{1.5cm}}
\hline
\abovespace\belowspace
Data-sets & \# classes & \# items & \# workers & \# worker labels\\
\hline
\abovespace
Bird    & 2 & 108 & 39 & 4,212 \\
RTE     & 2 & 800 & 164 & 8,000  \\
TREC    & 2 & 19,033 & 762 & 88,385  \\
Dog     & 4 & 807 & 109 & 8,070 \\
Web     & 5 & 2,665 & 177 & 15,567 \\
\hline
\belowspace
\end{tabular}
\end{sc}
\end{small}
\end{center}
\vskip -0.1in
\end{table}

Five real crowdsourcing data sets are used in this study: (1) bird dataset \cite{welinder2010multidimensional} is a binary labeling task , (2) recognizing textual entailment (RTE) dataset \cite{snow2008cheap} contains pairs of sentences and is a binary task to determine if the second sentence can be inferred from the first, (3) TREC is a binary task from TREC 2011 crowdsourcing track \cite{lease2011overview} assessing the quality of information retrieval, (4) Dog dataset contains a set of pictures from ImageNet \cite{deng2009imagenet} and the task is to label the four breads of dogs, (5) web dataset \cite{zhou2012learning} is a set of query-URL pairs for workers to label a relevance score from 1 to 5.

Except for the bird dataset, the other datasets contain lots of missing values. It is common for real datasets, as workers do not assign labels to all the items. To accommodate our feature selection technique to missing values, a natural way is to add a virtual label for each variable $X_i$, i.e. $X_i \in \{1, 2, \ldots, C, \text{`n/a'}\}$. If we assume that $X_i$ being missing is not discriminative of the latent variable $Y$, we can adjust the algorithm by calculating $\sum_{\substack{x_i\neq `n/a' \\ x_j\neq`n/a'}} P(x_i,x_j) \log\frac{P(x_i,x_j)}{P(x_i)P(x_j)}$ for $I(X_i, X_j)$, to eliminate the contribution of the virtual label to mutual information.


\subsection{Results}

\begin{table}[t]
\caption{Mis-clustering rate (\%) of algorithms are reported. For \emph{opt-D\&S} \cite{zhang2014spectral}, we repeated the algorithm 20 times and report the median error rate. The top rows show the results of the algorithms on the complete datasets and the bottom rows demonstrate the results after feature selection. The numbers in the parentheses are the number of features used when the algorithms achieve the optimal accuracy.}
\label{tb:result}
\vskip 0.15in
\begin{center}
\begin{small}
\begin{sc}
\begin{tabular}{lccr}
\hline
\abovespace\belowspace
Dataset & Opt-D\&S & MV & MV+EM \\
\hline
\abovespace
Bird    & 11.11 & 24.07 & \textbf{10.18}  \\
RTE     & \textbf{7.12} & 10.31 & 7.25  \\
TREC    & 32.33 & 34.86 & \textbf{29.76}   \\
Dog     & 15.75 & 17.78 & \textbf{15.74}  \\
Web     & 29.22 & 27.09 & \textbf{17.52}  \\
\hline
\hline
\abovespace\belowspace
& FS+Opt-D\&S & FS+MV & FS+MV+EM \\
\hline
\abovespace
Bird    & \textbf{8.33} (15) & 10.18 (5) & \textbf{8.33} (15) \\
RTE     & \textbf{7.12} (163) & 8.00 (162) & 7.25 (159) \\
TREC    & 30.11 (425) & 34.81 (378) & \textbf{29.47} (459) \\
Dog     & \textbf{15.46} (76) & 17.35 (64) & 15.49 (75) \\
Web     & 11.41 (17) & 12.03 (8) & \textbf{11.20} (9) \\
\hline

\end{tabular}
\end{sc}
\end{small}
\end{center}
\vskip -0.1in
\end{table}

We report the mis-clustering rate of different algorithms and their performance under feature selection in table \ref{tb:result}. Majority voting (alone) are probably thought as the simplest algorithm for crowdsourcing. As known to the crowdsourcing society, using the EM to refine the majority voting algorithm can improve the error rate (see the top half of the table). This is probably due to the noise of the worker labels. We show that with proper feature (worker) selection, the noise can be reduced. For example, for bird and web datasets, majority voting did not work well on the complete datasets, compared to \emph{opt-D\&S} and \emph{MV+EM}. However, after feature selection, the performance of majority voting becomes on a par with or even better than the performances of the more sophisticated algorithms (without feature selection). Also, both \emph{opt-D\&S} and \emph{MV+EM} benefit from feature selection in terms of the mis-clustering rate. Moreover, the results shed light on the redundant nature of crowdsourcing datasets.



To better understand the influence of our feature selection technique, figure \ref{fig:datasets} show the mis-clustering rates of the algorithms at different levels of feature selection. 

\begin{figure}
\centering
\begin{minipage}{.49\columnwidth}
	\centerline{\includegraphics[width=\columnwidth]{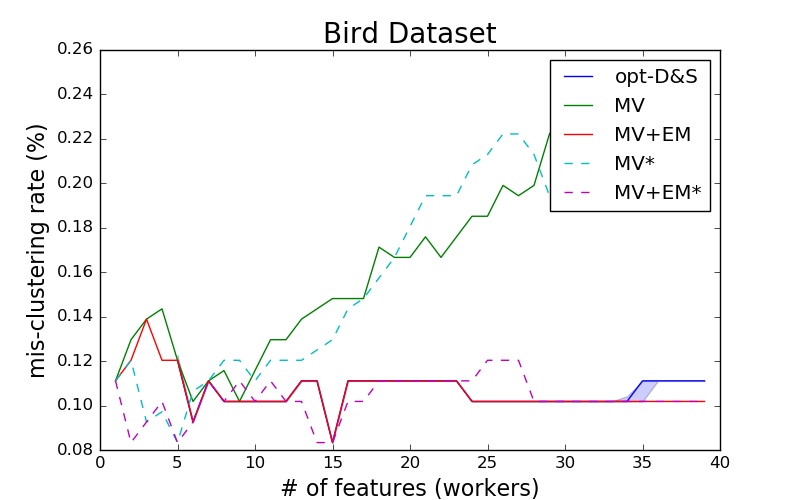}}
	\centerline{\includegraphics[width=\columnwidth]{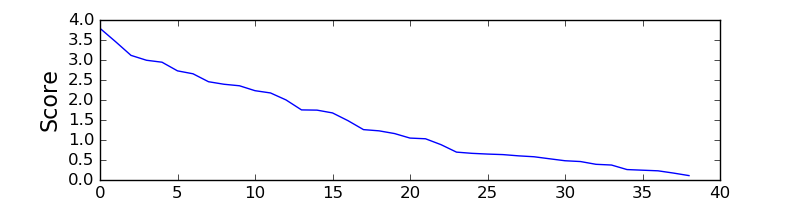}}	
\end{minipage}
\begin{minipage}{.49\columnwidth}
	\centerline{\includegraphics[width=\columnwidth]{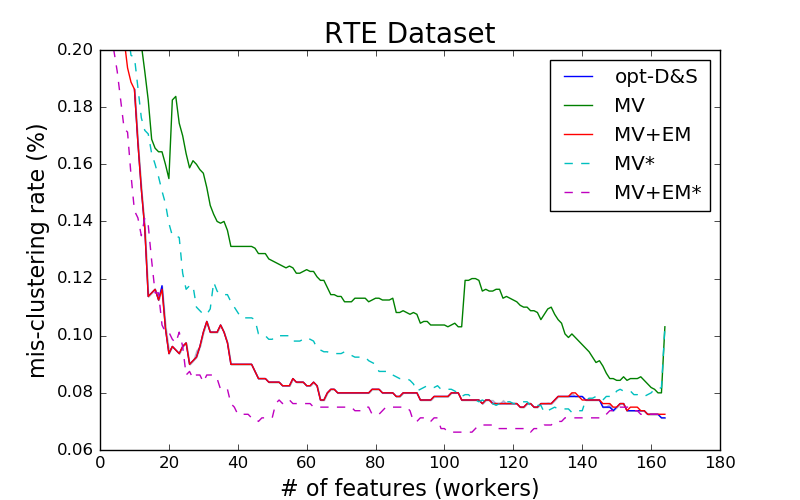}}
	\centerline{\includegraphics[width=\columnwidth]{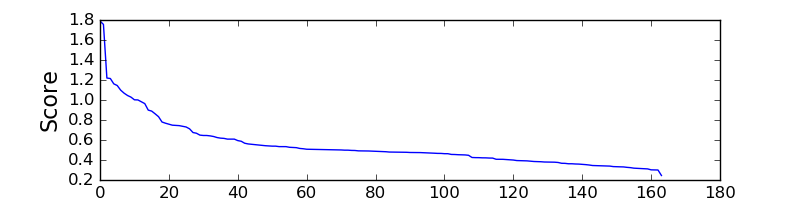}}
\end{minipage}
\begin{minipage}{.49\columnwidth}
	\centerline{\includegraphics[width=\columnwidth]{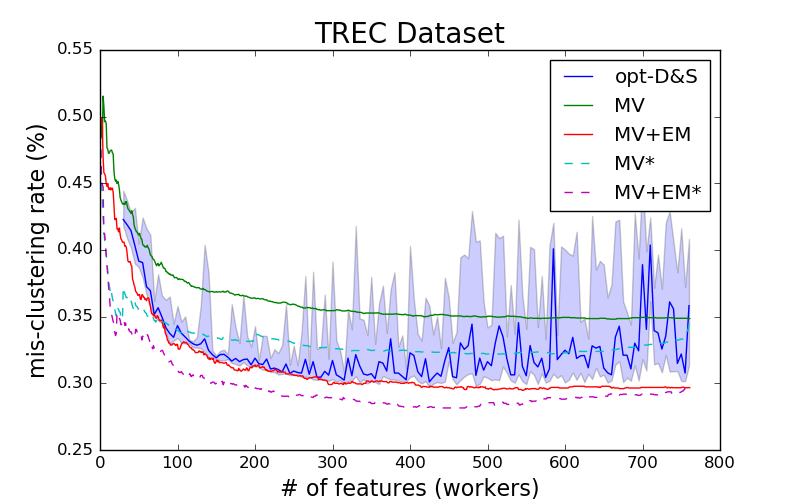}}
	\centerline{\includegraphics[width=\columnwidth]{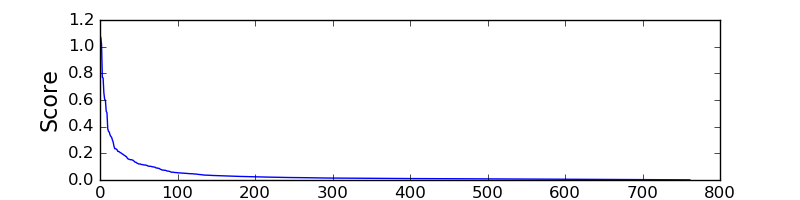}}
\end{minipage}
\begin{minipage}{.49\columnwidth}
	\centerline{\includegraphics[width=\columnwidth]{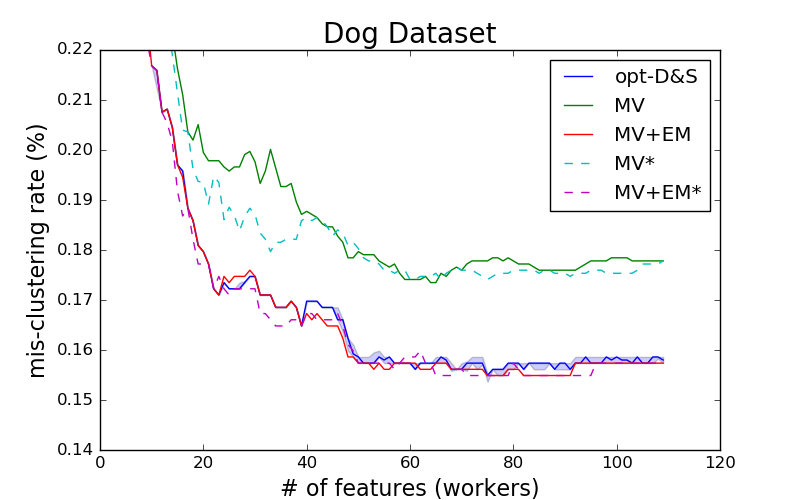}}
	\centerline{\includegraphics[width=\columnwidth]{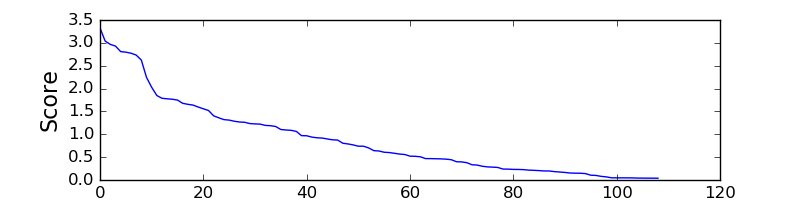}}
\end{minipage}
\begin{minipage}{.49\columnwidth}
	\centerline{\includegraphics[width=\columnwidth]{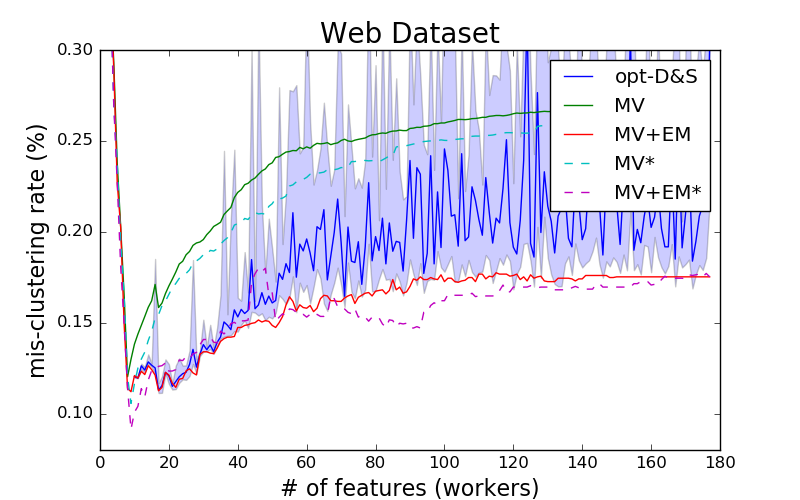}}
	\centerline{\includegraphics[width=\columnwidth]{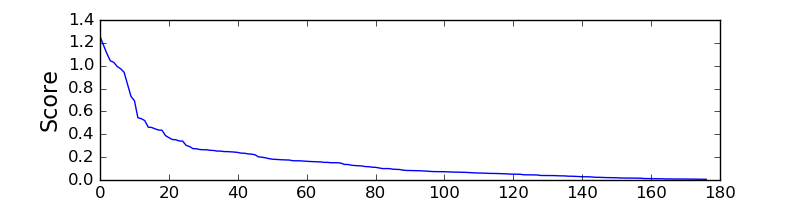}}
\end{minipage}
\begin{minipage}{.49\columnwidth}
	\caption{The top figure shows the mis-clustering rate of different algorithms under feature selection. For \emph{opt-D\&S}, the algorithm was repeated 20 times and the median is plotted while the first and the third quartiles are displayed as the shaded error bar. The dashed lines are benchmarks by utilizing the supervised feature selection mentioned in context. The bottom figure shows the mutual information score (equation \ref{eq:score}).}
	\label{fig:datasets}
\end{minipage}

\end{figure}


\textbf{The real datasets are redundant.} From all the figures, it is clear that there is a big drop in the mis-clustering rate when the top few features are utilized. As the curve gets flattened quickly, the marginal utility is diminishing fast.




\textbf{In most cases, the proposed feature selection technique (the solid lines) remains competitive, compared to the supervised feature selection (the dashed lines).} For example, for bird and dog datasets, the performance of our feature selection technique stays close to that of the supervised feature selection, especially when the number of features is small.


\textbf{Feature selection makes algorithms more robust and can potentially improve the outcomes.} We noticed that in some cases (e.g. TREC dataset and web dataset) the mis-clustering rate of $opt-D\&S$ fluctuates a lot. It is possibly because of the noise in the data. Feature selection helps filtering out noisy data and makes $opt-D\&S$ more robust. For web dataset, feature selection significant improves the error rates for all the algorithms.


\section{Discussion}
In this paper, we proposed a novel feature selection technique for learning MDPD which is based on pairwise mutual information. The utilization of mutual information was justified by a goodness-of-fit measure of the mixture model. Empirical studies of feature selection in application of crowdsourcing are also reported. Our feature selection algorithm are able to identify relevant, useful and informative features for MDPD, filters out the noise in the data, and makes the learning more robust. We argue that this feature selection technique is generic. It is not ad hoc for crowdsourcing, as it does not require any additional assumptions. Since it is based on mutual information, it is invariant to label swapping.



\bibliography{cite}

\begin{thebibliography}{10}

\bibitem{azizyan2013minimax}
Martin Azizyan, Aarti Singh, and Larry Wasserman.
\newblock Minimax theory for high-dimensional gaussian mixtures with sparse
  mean separation.
\newblock In {\em Advances in Neural Information Processing Systems}, pages
  2139--2147, 2013.

\bibitem{chaudhuri2008learning}
Kamalika Chaudhuri and Satish Rao.
\newblock Learning mixtures of product distributions using correlations and
  independence.
\newblock In {\em COLT}, volume~4, pages 9--1, 2008.

\bibitem{dawid1979maximum}
Alexander~Philip Dawid and Allan~M Skene.
\newblock Maximum likelihood estimation of observer error-rates using the em
  algorithm.
\newblock {\em Applied statistics}, pages 20--28, 1979.

\bibitem{deng2009imagenet}
Jia Deng, Wei Dong, Richard Socher, Li-Jia Li, Kai Li, and Li~Fei-Fei.
\newblock Imagenet: A large-scale hierarchical image database.
\newblock In {\em Computer Vision and Pattern Recognition, 2009. CVPR 2009.
  IEEE Conference on}, pages 248--255. IEEE, 2009.

\bibitem{dy2004feature}
Jennifer~G Dy and Carla~E Brodley.
\newblock Feature selection for unsupervised learning.
\newblock {\em Journal of machine learning research}, 5(Aug):845--889, 2004.

\bibitem{feldman2008learning}
Jon Feldman, Ryan O'Donnell, and Rocco~A Servedio.
\newblock Learning mixtures of product distributions over discrete domains.
\newblock {\em SIAM Journal on Computing}, 37(5):1536--1564, 2008.

\bibitem{ghosh2011moderates}
Arpita Ghosh, Satyen Kale, and Preston McAfee.
\newblock Who moderates the moderators?: crowdsourcing abuse detection in
  user-generated content.
\newblock In {\em Proceedings of the 12th ACM conference on Electronic
  commerce}, pages 167--176. ACM, 2011.

\bibitem{guyon2003introduction}
Isabelle Guyon and Andr{\'e} Elisseeff.
\newblock An introduction to variable and feature selection.
\newblock {\em Journal of machine learning research}, 3(Mar):1157--1182, 2003.

\bibitem{jain2014learning}
Prateek Jain and Sewoong Oh.
\newblock Learning mixtures of discrete product distributions using spectral
  decompositions.
\newblock In {\em COLT}, pages 824--856, 2014.

\bibitem{jin2016influential}
Jiashun Jin, Wanjie Wang, et~al.
\newblock Influential features pca for high dimensional clustering.
\newblock {\em The Annals of Statistics}, 44(6):2323--2359, 2016.

\bibitem{lease2011overview}
Matthew Lease and Gabriella Kazai.
\newblock Overview of the trec 2011 crowdsourcing track.
\newblock In {\em Proceedings of the text retrieval conference (TREC)}, 2011.

\bibitem{pan2007penalized}
Wei Pan and Xiaotong Shen.
\newblock Penalized model-based clustering with application to variable
  selection.
\newblock {\em Journal of Machine Learning Research}, 8(May):1145--1164, 2007.

\bibitem{snow2008cheap}
Rion Snow, Brendan O'Connor, Daniel Jurafsky, and Andrew~Y Ng.
\newblock Cheap and fast---but is it good?: evaluating non-expert annotations
  for natural language tasks.
\newblock In {\em Proceedings of the conference on empirical methods in natural
  language processing}, pages 254--263. Association for Computational
  Linguistics, 2008.

\bibitem{wainwright2008graphical}
Martin~J Wainwright, Michael~I Jordan, et~al.
\newblock Graphical models, exponential families, and variational inference.
\newblock {\em Foundations and Trends{\textregistered} in Machine Learning},
  1(1--2):1--305, 2008.

\bibitem{welinder2010multidimensional}
Peter Welinder, Steve Branson, Serge~J Belongie, and Pietro Perona.
\newblock The multidimensional wisdom of crowds.
\newblock In {\em NIPS}, volume~23, pages 2424--2432, 2010.

\bibitem{whitehill2009whose}
Jacob Whitehill, Ting-fan Wu, Jacob Bergsma, Javier~R Movellan, and Paul~L
  Ruvolo.
\newblock Whose vote should count more: Optimal integration of labels from
  labelers of unknown expertise.
\newblock In {\em Advances in neural information processing systems}, pages
  2035--2043, 2009.

\bibitem{zhang2014spectral}
Yuchen Zhang, Xi~Chen, Denny Zhou, and Michael~I Jordan.
\newblock Spectral methods meet em: A provably optimal algorithm for
  crowdsourcing.
\newblock In {\em Advances in neural information processing systems}, pages
  1260--1268, 2014.

\bibitem{zhou2012learning}
Denny Zhou, Sumit Basu, Yi~Mao, and John~C Platt.
\newblock Learning from the wisdom of crowds by minimax entropy.
\newblock In {\em Advances in Neural Information Processing Systems}, pages
  2195--2203, 2012.

\end{thebibliography}
\bibliographystyle{plain}




\end{document}